\newcommand{\suhail}[1]{\textcolor{black}{#1}}
\newtheorem{theorem}{Theorem}
\DeclareMathOperator*{\argmin}{arg\,min}
\title{Lazy Heuristic Search for Solving POMDPs with Expensive-to-Compute Belief Transitions}
\author{
    Muhammad Suhail Saleem, Rishi Veerapaneni, Maxim Likhachev
}
\begin{document}

\maketitle

\begin{abstract}
Heuristic search solvers like RTDP-Bel and LAO* have proven effective for computing optimal and bounded sub-optimal solutions for Partially Observable Markov Decision Processes (POMDPs), which are typically formulated as belief MDPs. A belief represents a probability distribution over possible system states. Given a parent belief and an action, computing belief state transitions involves Bayesian updates that combine the transition and observation models of the POMDP to determine successor beliefs and their transition probabilities. However, there is a class of problems, specifically in robotics, where computing these transitions can be prohibitively expensive due to costly physics simulations, raycasting, or expensive collision checks required by the underlying transition and observation models, leading to long planning times. To address this challenge, we propose Lazy RTDP-Bel and Lazy LAO*, which defer computing expensive belief state transitions by leveraging Q-value estimation, significantly reducing planning time. We demonstrate the superior performance of the proposed lazy planners in domains such as contact-rich manipulation for pose estimation, outdoor navigation in rough terrain, and indoor navigation with a 1-D LiDAR sensor. Additionally, we discuss practical Q-value estimation techniques for commonly encountered problem classes that our lazy planners can leverage. Our results show that lazy heuristic search methods dramatically improve planning speed by postponing expensive belief transition evaluations while maintaining solution quality. 
\end{abstract}

%

\section{Introduction}


Partially Observable Markov Decision Processes (POMDPs) provide a principled framework for decision-making under uncertainty and have been widely employed in various robotics domains, including autonomous navigation and manipulation tasks \cite{pomdp_driving, pomdp_manipulation}. As the true state of the system is not directly observable, POMDPs maintain a probability distribution over possible system states at any given time, referred to as belief states. They are commonly formulated and solved as a Markov Decision Process (MDP) in the belief space. However, solving POMDPs is notoriously challenging due to the belief space growing exponentially with each additional state variable (curse of dimensionality), as well as with the planning horizon (curse of history) \cite{kaelbling1998planning}.

Search-based solvers like RTDP-Bel \cite{rtdp-bel} and LAO* \cite{lao} leverage heuristics to focus computational efforts on promising regions of the belief space, efficiently converging to optimal or bounded suboptimal solutions. Although powerful and effective in many problems, their applicability still remains limited—particularly in real-world robotics problems. One of the many reasons for this is the inherent assumption that belief transitions are readily available. Computing transitions in the belief space, i.e., determining the successor beliefs and their probabilities given the current belief over the system state and an action, involves Bayesian updates that combine the transition and observation models of the underlying POMDP. However, in many robotics applications, this computation poses a significant bottleneck due to the high computational cost of querying these models. For instance, outdoor robot navigation may require high-fidelity physics simulations to predict action outcomes, while contact-rich manipulation tasks often involve expensive collision checks to model expected interactions. These operations are expensive to query, even for a single system state. Since the belief state is a distribution over the system states, computing the transitions from a belief state would involve querying these operations for \textit{every} state in (the support of) the belief. Since in a typical problem we can expect to search over thousands of belief states, the total cost of computing belief transitions becomes prohibitively expensive, limiting the practicality of POMDP-based solutions for such applications.


To mitigate this challenge, we propose lazy heuristic search solvers for POMDPs, which defer expensive belief state transition computations until necessary. Specifically, we introduce two novel algorithms: Lazy RTDP-Bel and Lazy LAO*. These methods leverage $Q$-value estimation to identify the most promising action from a belief state, and only the transition corresponding to the most promising action is computed. The belief state transitions for other actions are postponed until the search deems these actions necessary, i.e., have the potential to improve the current solution. By postponing these expensive evaluations and only computing them when needed, we save significant computation time. We show that the $Q$-value estimates leveraged by our lazy planners are equivalent to heuristic functions and are readily available for common problems, particularly in robotics. Moreover, if the $Q$-value estimates are conservative, then the proposed lazy planners, like their vanilla counterparts, are guaranteed to converge to the optimal solution.



We demonstrate the effectiveness of our proposed lazy planners on three robotics problems i) Manipulation for pose estimation using contacts, ii) Indoor navigation with a 1D LiDAR sensor, and iii) Outdoor rough terrain navigation. Belief transition computations for each of these domains are expensive due to them requiring mesh-to-mesh collision checks, ray casting operations, and physics-based simulations, respectively. Results presented in Section \ref{sec: results}, demonstrate the superior performance of the lazy algorithms on the abovementioned problems, significantly improving planning efficiency while maintaining solution quality.

\section{Background}

In this manuscript, we introduce the concept of laziness in the context of Goal-POMDPs \cite{bertsekas2012dynamic}, though it naturally extends to reward-based formulations. 

\subsection{Goal-POMDPs}

Formally, a discrete Goal-POMDP is defined by the tuple $\langle \mathcal{S}, \mathcal{A}, \mathcal{Z}, G, \mathcal{T}, \mathcal{O}, c, b_0 \rangle$, where $\mathcal{S}$, $\mathcal{A}$, and $\mathcal{Z}$ are finite non-empty sets representing the state space, action space, and observation space respectively. $G \subseteq \mathcal{S}$ is the set of goal states. The transition model $\mathcal{T}(s, a, s') = p(s'| s, a)$ defines the probability of transitioning to state $s'$ when executing action $a$ from state $s$, while the observation model $\mathcal{O}(s', a, z) = p(z| s', a)$ gives the probability of observing $z$ upon entering $s'$ via action $a$. The cost function $c(s, a)$ determines the cost of executing action $a$ from state $s$, and $b_0$ is the initial belief state, i.e., the distribution over possible initial states. We assume that goal states $g \in G$ are cost-free, absorbing, and fully observable ($z_g \in \mathcal{Z}$ denotes the observation made from a goal state) : (i) $c(g, a) = 0 , \forall , a \in \mathcal{A}$, (ii) $\mathcal{T}(g, a, g) = 1$, and (iii) $\mathcal{O}(g, a, z_g) = 1$ for all $a \in \mathcal{A}$, with $\mathcal{O}(s, a, z_g) = 0$ for $s \notin G$.

POMDPs are commonly solved by reformulating them as completely observable MDPs over the belief space. A belief state $b$ is a probability distribution over states, where $b(s)$ represents the probability of being in state $s$. Although the system state is not directly observable, we can compute belief updates that capture the effects of actions and observations on the belief state. The belief update after executing action $a$ from belief $b$ is given by

\begin{equation}
    b_a(s) = P(s| b, a) = \sum_{s' \in \mathcal{S}} \mathcal{T}(s', a, s) b(s')
\end{equation}
Similarly, the probability of observing $z$ after taking action $a$ from $b$ is, 

\begin{equation}
    b_a(z) = P(z | b, a) = \sum_{s \in \mathcal{S}} b_a(s) \mathcal{O}(s, a, z)
\end{equation}

Then, we can write the belief update when executing $a$ from $b$ and observing $z$ as
\begin{equation}
    b_a^z(s) = \mathcal{O}(s, a, z) b_a(s) / b_a(z) \quad \text{if } \, b_a(z) \neq 0
\end{equation}


This formulation transforms the POMDP into a fully observable belief MDP, where the transition function is given by $P(b_a^z | b, a) = b_a(z)$. Consequently, reaching a goal state in the original POMDP translates to reaching a goal belief $b_g$, where $b_g(s) = 0$ for all $s \notin G$. The objective of the problem is to compute a belief state policy from the initial belief $b_0$ to a goal belief $b_g$ while minimizing the expected cost. The Bellman equation for the belief MDP is given by
\begin{equation}
    V^*(b) = \min_{a \in \mathcal{A}} \,\, \Biggl\{ c(b, a) + \sum_{z \in Z} b_a(z) V^*(b_a^z) \Biggr\}
\end{equation}
for non-goal beliefs and $V^*(b_g) = 0$ for goal beliefs. $c(b, a)$ is the expected cost of executing action $a$ from $b$, i.e., $c(b, a) = \sum_{s \in S} c(s, a) b(s)$. 

\subsection{RTDP-Bel}
\begin{algorithm}[t]
\caption{\textsc{RTDP-Bel}} \label{Alg: RTDP-Bel}
\begin{algorithmic}[1]
\While {\textsc{Not Converged}}
\State $b = b_{0}$ 
\State \textbf{Sample} state $s$ with probability $b(s)$
\While {$b$ is not a goal belief}
\State \textbf{Evaluate} each action $a \in \mathcal{A}$ from $b$
\State \textbf{Compute} value of each action as:
    \begin{equation*}\vspace{-0.15cm}
    Q(b, a) = c(b, a) + \sum_{z \in \mathcal{Z}} P(z | b, a) V(b_a^z)
    \end{equation*}
    \Comment{{${V(b_a^z) = \text{heur}(b_a^z)}$ if uninitialized}} \vspace{0.15cm} \label{RTDP-Bel: Q-value}
\State \textbf{Select} action $a_{best}$ that minimizes $Q(b, a)$ \label{RTDP-Bel: update1}
\State \textbf{Update} value $V(b) = Q(b, a_{best})$ \label{RTDP-Bel: update2}
\State \textbf{Sample} $s'$ with probability $\mathcal{T}(s, a_{best}, s')$ \label{RTDP-Bel: succ1}
\State \textbf{Sample} $z$ with probability $\mathcal{O}(s', a_{best}, z)$
\State \textbf{Compute} $b_a^z$ and set $b := b_a^z \text{ and } s:= s'$
\label{RTDP-Bel: succ2}
\EndWhile
\EndWhile
\end{algorithmic}
\end{algorithm}

RTDP-Bel\footnote{We discuss the unapproximated version of RTDP-Bel that maintains the values for each belief state independently.} \cite{rtdp-bel}, outlined in Alg \ref{Alg: RTDP-Bel}, is an adaptation of RTDP \cite{rtdp} to POMDPs. As an asynchronous value iteration algorithm, it converges to the optimal value function and policy for only the relevant regions of the belief space.  The algorithm operates through a series of rollouts (or trials), each beginning from the initial belief state $b_0$ and continuing until a goal belief is reached. At each belief state $b$, all possible actions are evaluated. Evaluating an action $a$ from $b$ involves computing the belief transitions corresponding to the action, i.e., the successor beliefs $b^z_a$. The utility of these actions is then assessed by computing their $Q$-values (Line~\ref{RTDP-Bel: Q-value}). The action with the lowest $Q$-value is selected, and the value estimate of $b$ is updated accordingly (Lines~\ref{RTDP-Bel: update1}–\ref{RTDP-Bel: update2}). The process then proceeds to a successor belief, determined based on the chosen action and a randomly sampled observation (Lines~\ref{RTDP-Bel: succ1}–\ref{RTDP-Bel: succ2}).


A crucial component of the algorithm’s performance is the heuristic function, heur, which initializes the value estimates. A well-informed heuristic accelerates convergence by guiding the search toward the goal quickly. The heuristic must be admissible, i.e., underestimate the optimal value function, for the converged policy to be optimal. Further, \citet{kim2019pomhdp} show that inflating an admissible heuristic by $\epsilon$ allows each search iteration to be more strongly influenced by the heuristic, leading to significantly faster convergence to an $\epsilon$-suboptimal solution.

\subsection{LAO*}


\begin{algorithm}[t]
\caption{LAO*} \label{alg:LAO*}
\begin{algorithmic}[1]
\State $G^* \gets \{b_{0}\}$ 
\While{$G^*$ has non-terminal tip states}
    \State \textbf{Identify} a non-terminal tip state $b$ in $G^*$
    \State \textbf{Evaluate} each action $a \in \mathcal{A}$ from $b$ \label{Lao: Evaluation} 
    \State \textbf{Compute} value of each action as: 
    \begin{equation*}\vspace{-0.15cm}
    Q(b, a) = c(b, a) + \sum_{z \in \mathcal{Z}} P(z | b, a) V(b_a^z)
    \end{equation*}
    \Comment{{${V(b_a^z) = \text{heur}(b_a^z)}$ if uninitialized}} \vspace{0.15cm}    \label{lao: Q-value}
    \State \textbf{Select} action $a_{best}$ that minimizes $Q(b, a)$     
    \State \textbf{Update} value $V(b) = Q(b, a_{best})$ \label{Lao: Update value}
    \State \textbf{Create} set $Z$ with $b$ and ancestors of $b$ in $G^*$
    \State \textsc{ImproveValues}($Z$) \Comment{Value iteration} \label{Lao: Value iteration Z}
    \State \textbf{Update} $G^*$ to current best solution 
    \label{Lao: update G*}
\EndWhile
\State \textsc{ImproveValues}($G^*$) \Comment{Value iteration}
\label{Lao: value iteration G*}
\If{$G^* \neq $ current best solution}
\State \textbf{Update} $G^*$ to current best solution 
\State \textbf{Goto} Line 2
\EndIf
\State \Return $G^*$ \label{Lao: end of function}
\end{algorithmic}
\end{algorithm}

LAO* \cite{lao} outlined in Alg \ref{alg:LAO*}, interleaves forward expansion with dynamic backup updates while ensuring that no states unreachable from the start belief $b_0$ under the current best policy are considered. It maintains a partial solution graph $G^*$, which consists of all belief states reachable from $b_0$ under the current policy. The algorithm iteratively expands and updates $G^*$ until no non-terminal tip states remain. 

A belief state $b$ is classified as a non-terminal tip state if it is neither a goal belief nor has been expanded, meaning its successor beliefs $b^z_a$ have not been generated. When such a state is encountered, all possible actions from $b$ are evaluated, and their corresponding successor beliefs are computed. The optimal action is selected based on the $Q$-value, and the value of $b$ is updated accordingly (Lines~\ref{Lao: Evaluation}–\ref{Lao: Update value}). Next, a set $Z$ is formed, including $b$ and its ancestor beliefs in $G^*$, and value iteration is performed on $Z$ to update both values and optimal actions (Line \ref{Lao: Value iteration Z}). Finally, the solution graph $G^*$ is updated (Line~\ref{Lao: update G*}). This process repeats until no non-terminal tip states remain in $G^*$.


Once no further updates are required, value iteration is performed on all belief states in $G^*$. If the updates lead to changes in the best solution, $G^*$ is revised, and the algorithm returns to Line 2 \footnote{This is a slightly modified interpretation of LAO* to facilitate explanation, though it maintains the same fundamental properties as the original algorithm.}. If the values converge, the algorithm terminates, returning the final solution policy represented by $G^*$ (Lines \ref{Lao: value iteration G*}-\ref{Lao: end of function}). Similar to RTDP-Bel, the heuristic guides the search toward the most promising belief regions, facilitating faster convergence. The same principles of optimality with an admissible heuristic and bounded sub-optimality with an inflated heuristic apply to LAO* as well.

\section{Related Work}



Beyond heuristic search, two widely studied approaches for scalably solving POMDPs are point-based methods and Monte Carlo methods. Point-based POMDP solvers are a class of approximate algorithms designed to efficiently solve large POMDPs by selectively maintaining and updating a representative set of belief points. Solvers such as PBVI \cite{pbvi}, HSVI \cite{hsvi}, and SARSOP \cite{sarsop}, leverage value iteration on a subset of beliefs to compute near-optimal policies while significantly reducing computational complexity. \suhail{However, Point-based backups—a core operation of these solvers—assume the transition and observation models are readily accessible and require querying them at every state in the state space. While feasible for simpler problems, where one can precompute a tabulated transition and observation matrix, our domains involve large state spaces with expensive models, making such enumerations impractical.} Extensive discussions on the many point-based solvers can be found in \citet{shani2013survey}. On the other hand, Monte Carlo methods, particularly suited for online planning, approximate the value of belief states by simulating action sequences and propagating rewards. Works like \citet{pomcp, pomdp_mcts, kurniawati2016online} sample state trajectories and update action values based on rollouts. DESPOT \cite{despot}, a popular approach in this realm, improves efficiency by maintaining a fixed number of sampled scenarios, generating a sparsely sampled belief tree that performs well in problems with large observation spaces. 


In deterministic search, the concept of laziness has been widely explored to defer checking the validity of edges, which is often the computational bottleneck, especially in robotics applications where collision checking is expensive. Lazy Weighted A* (LWA*) \cite{lwa} postpones edge evaluations until expansion, meaning only edges leading to expanded nodes are evaluated. On the other hand, LazySP \cite{lazysp} initially assumes all edges are valid and finds a path to the goal; if an edge is found to be invalid, it is removed from the graph, and the search is rerun, minimizing the number of evaluations. Variants presented in \citet{lrha, bhardwaj2021leveraging, gls} further leverage laziness to accelerate planning. While prior work has focused on deferring edge evaluations in deterministic settings, we extend this concept to stochastic domains by postponing expensive belief transition computations.

\section{Methodology} \label{sec: Methodology}

Many heuristic search methods for POMDPs are on-policy, exploring only belief states in the current best solution policy \cite{rtdp-bel, lao, aems}. They maintain $Q$-values for actions available from each encountered belief state. At a given belief $b$, the best action $a_{best}$, minimizing the $Q$-values is selected, and the search explores only its successor beliefs. The $Q$-value of $a_{best}$ is then updated based on the outcome of this exploration. If this update changes the minimizing action at $b$, $a_{best}$ is updated, and the search proceeds with the new best action and its corresponding successor beliefs. 

The key observation here is that, since only the successor beliefs of the current best action $a_{best}$ are relevant and explored, belief transitions for all other actions remain unused. Consequently, computing these unused transitions introduces unnecessary computational overhead. Hence, we propose to only evaluate the best action $a_{best}$ from any explored belief state. By generating belief transitions only for $a_{best}$, we defer computing the belief transitions for all other actions until the search deems these actions to be promising, i.e., has potential to be part of the solution policy. This ensures that belief transition evaluations are performed only when necessary, significantly reducing computational complexity. Actions that are never considered promising at any point during the search remain unevaluated. This key principle of postponing action evaluations until they are deemed useful forms the essence of lazy search.

However, existing methods typically initialize $Q$-values using a one-step lookahead (Alg \ref{Alg: RTDP-Bel} Line \ref{RTDP-Bel: Q-value}; Alg \ref{alg:LAO*} Line \ref{lao: Q-value}), which involves evaluating each action by computing belief transitions and using successor belief values (initialized with a heuristic) to determine the actions' $Q$-values. This process incurs expensive belief transition computations for all actions, which the lazy search aims to defer. Hence, we propose replacing the one-step lookahead with a fast-to-query estimator that does not require belief transition computations. We denote this estimator as $\hat{Q}_{init}$, as it is used solely for initializing $Q$-values. Under this approach, only the belief transitions for $a_{best}$ are computed. 

The $Q$-estimators our lazy planners leverage are similar to heuristic functions and can be easily constructed for common problems. Just as search algorithms like RTDP-Bel and LAO* converge optimally with an admissible heuristic, $\text{heur}(b) \leq V^*(b)$, their lazy counterparts guarantee optimality if $\hat{Q}_{init}$ is admissible, i.e., $\hat{Q}_{init}(b, a) \leq Q^*(b, a)$.

We first introduce the lazy adaptations of RTDP-Bel and LAO*, followed by a brief discussion of simple techniques for constructing effective estimators.

\subsection{Lazy RTDP-Bel}

\begin{algorithm}
\caption{\textsc{Lazy RTDP-Bel}} \label{Alg: Lazy RTDP-Bel}
\begin{algorithmic}[1]
\Procedure{Lazy RTDP-Bel}{}
\While {\textsc{Not Converged}}
\State $b = b_{0}$ 
\State \textbf{Sample} state $s$ with probability $b(s)$
\While {$b$ is not a goal belief}
\If{$Q$-values not initialized for $b$} 
\label{Lazy RTDP-Bel: Lazy stuff 1}
    \State \textbf{Set} $Q(b,a) = \hat{Q}_{init}(b, a) \,\,\, \forall a \in \mathcal{A}$ 
    \label{Lazy RTDP-Bel: Q estimate}
\EndIf
\Do     \Comment{Do while loop (e.g., to Line \ref{Lazy RTDP-Bel: While})}
\label{Lazy RTDP-Bel: Do}
\State \textbf{Select} $a_{best}$ that minimizes $Q(b, a)$ 
\State \textbf{Evaluate} $a_{best}$ from $b$ if unevaluated
\State \textsc{Update Q-values}($b$)
\doWhile{$a_{best} \neq \argmin_{a \in \mathcal{A}} Q(b,a)$}
\label{Lazy RTDP-Bel: While}
\State \textbf{Update} value $V(b) = Q(b, a_{best})$ 
\State \textbf{Sample} $s'$ with probability $\mathcal{T}(s, a_{best}, s')$ \State \textbf{Sample} $z$ with probability $\mathcal{O}(s', a_{best}, z)$
\State \textbf{Compute} $b_{a_{best}}^z$;  set $b := b_{a_{best}}^z \text{ and } s:= s'$
\EndWhile
\EndWhile
\EndProcedure
\Statex
\Procedure{Update Q-values}{$b$}
\For{$a \in \mathcal{A}$}
\If{$a$ evaluated from $b$}
\State $Q(b, a) = \mathcal{C}(b, a) + \sum_{z \in \mathcal{Z}} P(z | b, a) V(b_{a}^z)$ \\ \vspace{0.1cm}
            \Comment{{${V(b_{a}^z) = \text{heur}(b_{a}^z)}$ if uninitialized}}
\EndIf
\EndFor
\EndProcedure

\end{algorithmic}
\end{algorithm}

The lazy adaptation of RTDP-Bel is outlined in Algorithm~\ref{Alg: Lazy RTDP-Bel} and is fairly straightforward with key differences being from Lines \ref{Lazy RTDP-Bel: Q estimate}-\ref{Lazy RTDP-Bel: While}. When a belief state is encountered for the first time, the $Q$-values of all available actions are initialized using the fast-to-compute estimator, $\hat{Q}_{init}$ (Line~\ref{Lazy RTDP-Bel: Q estimate}). The algorithm then identifies the best action, i.e., the action that minimizes the $Q$-values. Unlike standard RTDP-Bel, only this best action is evaluated, meaning that the belief transitions corresponding to the selected action are computed. The $Q$-value of the evaluated action is then updated accordingly through routine \textsc{Update Q-values}. If this update results in a change in the best action, $a_{best}$ is updated, and the process is repeated until the best action remains unchanged following an update (Lines~\ref{Lazy RTDP-Bel: Do}–\ref{Lazy RTDP-Bel: While}). This approach ensures that the algorithm evaluates only the actions that minimize the $Q$-value for any belief state encountered. By deferring the evaluation of actions until they are deemed promising and potentially part of the (bounded sub-)optimal solution, the method efficiently reduces unnecessary computations. 

\subsection{Lazy LAO*}

\begin{algorithm}[t]
\caption{Lazy LAO*} \label{alg:Lazy LAO*}
\begin{algorithmic}[1]
\Procedure{Lazy LAO*}{}
    \State $G^* \gets \{b_{0}\}$ 
    \While{$G^*$ has non-terminal tip states}
        \State \textbf{Identify} a non-terminal tip state $b$ in $G^*$
        \label{Lazy LAO: Identify non-terminal}
        
        \If{$Q$-values not initialized for $b$}
            \State \textbf{Set} $Q(b,a) = \hat{Q}_{init}(b, a) \,\,\, \forall a \in \mathcal{A}$
            \label{Lazy LAO: Estimate}
        \EndIf
        
        \Do \Comment{Do while loop (e.g., to Line \ref{Lazy LAO: While})}
        \label{Lazy LAO: Do}
        \State \textbf{Select} action $a_{best}$ that minimizes $Q(b, a)$ 
        \State \textbf{Evaluate} $a_{best}$ from $b$ if unevaluated
        \label{Lazy LAO: Evaluate action}
        \State \textsc{Update $Q$-values}(b)
        \doWhile{$a_{best} \neq \argmin_{a \in \mathcal{A}} Q(b,a)$}
        \label{Lazy LAO: While}
        \State \textbf{Update} value $V(b) = Q(b, a_{best})$
        \label{Lazy LAO: Update value}
        \State \textbf{Create} set $Z$ with $b$ and ancestors of $b$ in $G^*$

        \State \textsc{ImproveValues}($Z$)

        \State \textbf{Update} $G^*$ to current best solution
        \label{Lazy LAO: update current best solution 1}
    \EndWhile 
    \label{Lazy LAO: Return to Line 3 1}
    \State \textsc{ImproveValues}($G^*$)
    \If{$G^* \neq $ current best solution}
    \State \textbf{Update} $G^*$ to current best solution 
    \label{Lazy LAO: update current best solution 2}
    \State \textbf{Goto} Line 3 
    \label{Lazy LAO: Return to Line 3 2}
    \EndIf
    \State \Return $G^*$
    \label{Lazy LAO: End of function}
\EndProcedure

\Statex
\Procedure{ImproveValues}{$\mathcal{B}$}
\While{Error bound greater than $\epsilon$}
\For{$b \in \mathcal{B}$}
\State \textsc{Update Q-values}($b$)
\State \textbf{Select} action $a_{best}$ that minimizes $Q(b, a)$ 
\If {$a_{best}$ not evaluated from $b$}
\label{Lazy LAO: Improve value return}
\State return
\EndIf
\State Update value $V(b) = Q(b, a_{best})$ 
\EndFor
\EndWhile
\EndProcedure

\end{algorithmic}
\end{algorithm}

The lazy adaptation of LAO* outlined in Alg \ref{alg:Lazy LAO*} follows the same fundamental structure as vanilla LAO*. Like its standard counterpart, the algorithm maintains the current best partial solution graph, $G^*$, and iteratively identifies non-terminal tip states for expansion. However, we slightly modify the definition of a non-terminal tip state. In vanilla LAO*, a non-terminal tip state is a non-goal belief whose actions have not been evaluated. In our adaptation, a non-terminal tip state is defined as either (i) a non-goal belief with no evaluated actions or (ii) a non-goal belief whose best action—the action with the lowest $Q$-value—has not yet been evaluated. 

Using this modified definition, we identify a non-terminal tip state in $G^*$ (Line \ref{Lazy LAO: Identify non-terminal}). If the belief state has not been encountered before, its $Q$-values are initialized using the quick-to-compute estimator $\hat{Q}_{init}$ (Line \ref{Lazy LAO: Estimate}). As in Lazy RTDP-Bel, the algorithm identifies the action minimizing the $Q$-values and evaluates only this action. The evaluated action’s value is then updated accurately. If this update changes the best action, the newly identified best action is evaluated, and the process repeats (Lines \ref{Lazy LAO: Do}-\ref{Lazy LAO: While}). This ensures only promising actions from a state are evaluated. After this step, the algorithm proceeds like LAO*, using the $Q$-value of the best action to update $b$. The set $Z$, consisting of $b$ and its ancestors within $G^*$, is then constructed, and value improvements are performed. This continues until no non-terminal tip states remain in $G^*$. At this point, the values of all states in $G^*$ are further refined until either $G^*$ is modified or the values converge (Lines \ref{Lazy LAO: Update value}-\ref{Lazy LAO: End of function}).

A key difference from vanilla LAO* lies in how values are improved in the \textsc{ImproveValues} routine. In standard LAO*, this routine is simple value iteration, assuming all actions from a belief state are either evaluated or can be cheaply evaluated. In contrast, our adaptation accounts for lazy evaluations. If, during backup, the best action of a state changes to an unevaluated one, backups terminate (Line \ref{Lazy LAO: Improve value return}). If the best action for any state in either set $Z$ or $G^*$ changes, it implies that the current best partial solution has been modified, necessitating an update to $G^*$ (Lines \ref{Lazy LAO: update current best solution 1}, \ref{Lazy LAO: update current best solution 2}), followed by which the algorithm returns to Line 3 (Lines \ref{Lazy LAO: Return to Line 3 1}, \ref{Lazy LAO: Return to Line 3 2}). Notably, actions are evaluated only in Line \ref{Lazy LAO: Evaluate action}; at no other point are they explicitly evaluated. These modifications ensure that action evaluations are deferred as much as possible, providing sizeable computational savings. 

\begin{theorem}
If the estimator $\hat{Q}_{init}$ is an underestimate, i.e.,  $\hat{Q}_{init}(b, a) \leq Q^*(b, a) \,\, \forall \,\, b \in \mathcal{B}, a \in \mathcal{A}$, where $Q^*(b, a)$ corresponds to the optimal $Q$-value, then both Lazy RTDP-Bel and Lazy LAO* will converge to the optimal policy.
\end{theorem}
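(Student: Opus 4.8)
The plan is to reduce the convergence of the lazy variants to that of their vanilla counterparts by maintaining a single invariant---that every stored value is an admissible underestimate---and then showing that deferring evaluations never commits the search to a suboptimal action. Concretely, I would carry through the search the invariant that $Q(b,a) \leq Q^*(b,a)$ for all $b \in \mathcal{B}, a \in \mathcal{A}$ together with $V(b) \leq V^*(b)$, and argue that it, plus the structural guarantees of the do-while loops, forces the committed policy to be optimal at termination.

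First I would prove the invariant by induction over the operations that write to the value tables. The base case is immediate: an uninitialized action receives $Q(b,a) = \hat{Q}_{init}(b,a) \leq Q^*(b,a)$ by hypothesis, and an uninitialized belief receives $V(b) = \text{heur}(b) \leq V^*(b)$ by admissibility of the heuristic. For the inductive step, an evaluated action's value can only change through \textsc{Update Q-values}, where $Q(b,a) = c(b,a) + \sum_z P(z|b,a)\,V(b_a^z)$; since each successor obeys $V(b_a^z) \leq V^*(b_a^z)$ by hypothesis and the probabilities are nonnegative, we get $Q(b,a) \leq Q^*(b,a)$. Every value write is of the form $V(b) = Q(b,a_{best}) = \min_a Q(b,a) \leq \min_a Q^*(b,a) = V^*(b)$. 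Hence both inequalities survive the lazy initialization, the per-action updates, and the partial backups inside \textsc{ImproveValues}, for both algorithms.

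Next I would establish the \emph{safety of laziness}: whenever a belief commits its value, the committed action $a_{best}$ has genuinely been evaluated, so $V(b)$ is a true one-step backup rather than an estimate. This is exactly what the do-while loops enforce (the loop exits only when the current minimizer equals the action just evaluated), and, for Lazy LAO*, what the enlarged non-terminal-tip definition and the early return in \textsc{ImproveValues} enforce by refusing to back up through an unevaluated best action. Each such loop is finite because every iteration that does not exit promotes a previously unevaluated action, of which there are at most $|\mathcal{A}|$. Combining this with the invariant gives the key no-premature-pruning property: for any \emph{unevaluated} action $a$ at the committed belief, $Q(b,a_{best}) \leq \hat{Q}_{init}(b,a) \leq Q^*(b,a)$, so a deferred action is always provably no better than the committed one, and an optimistic-looking (underestimated) action is always probed before it can be committed.

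Finally I would invoke the convergence of vanilla RTDP-Bel and LAO*. On the subcollection of beliefs and actions the lazy algorithm actually evaluates, its dynamics coincide with the corresponding vanilla updates, and by the safety property the committed policy is always backed by genuine backups terminating at the cost-free absorbing goals. At termination---no non-terminal tip states and converged values---each committed belief satisfies $V(b) = Q(b,a_{best})$ with $a_{best}$ evaluated and its successors at converged values, so the fixed-point condition propagated back from the goals gives $V(b) = Q^*(b,a_{best}) \geq V^*(b)$; together with $V(b) \leq V^*(b)$ from the invariant this yields $V(b) = V^*(b) = Q^*(b,a_{best})$, i.e.\ $a_{best}$ is optimal even if other (also optimal) actions remain unevaluated. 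I expect the main obstacle to be the Lazy LAO* bookkeeping: one must show that the early-terminating \textsc{ImproveValues} and the enlarged tip-state definition cannot stall the outer loop or let it declare convergence while some reachable belief still has an unevaluated optimal action. I would handle this by a charging argument mirroring the finite-termination proof of the original LAO*---each extra re-expansion triggered by a newly promising action evaluates a fresh (belief, action) pair in the finite reachable set, bounding the number of such events, while monotone, admissibility-preserving value increases bound the remaining iterations.
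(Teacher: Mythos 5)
Your proposal follows essentially the same route as the paper's (very brief) proof sketch: maintain the admissibility invariant $Q(b,a) \leq Q^*(b,a)$ under Bellman backups, observe that backups monotonically raise the value of the current minimizing action so that a suboptimal minimizer is eventually displaced, and reduce convergence to that of the vanilla algorithms. You supply considerably more detail than the paper does---notably the finiteness of the do-while loops and the explicit no-premature-pruning property $Q(b,a_{best}) \leq \hat{Q}_{init}(b,a) \leq Q^*(b,a)$ for deferred actions---but the underlying argument is the same.
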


\begin{proof}
The idea is similar to the optimality proof for vanilla RTDP-Bel and LAO*. Since $Q_{init}$ is an underestimate, Bellman backups ensure $Q(b,a) \leq Q^*(b,a)$ for all belief-action pairs encountered throughout the search. Each backup progressively increases the value of the current best action (the action that minimizes $Q$-values) at belief states along the current policy, bringing it closer to the optimal value. If a suboptimal action initially minimizes the $Q$-value at a belief state, updates eventually raise its value above that of the optimal action, ensuring the correct action is selected and the search converges to the optimal policy.
\end{proof}

\subsubsection{Full horizon laziness} \label{subsection: FH Lazy}

\suhail{
 While the bottleneck in computing belief transitions is typically expensive transition or observation models, in some planning problems (as shown in Section \ref{sec: results}), the bottleneck is verifying an action's validity from a belief state. Here, successor beliefs can be efficiently computed, but determining action feasibility is computationally expensive. For example, in navigation tasks, an action may be deemed invalid from a belief $b$ if executing it from any state $s$ with $b(s) >0$ could cause a collision. Verifying feasibility requires expensive collision checks, while computing successor beliefs assuming the action is valid is relatively cheap (as motion-primitives are known). In such cases, we can adopt what we call full-horizon lazy planning. Here, the policy is initially computed (using your favorite solver) assuming all actions are valid. Only actions in the computed policy undergo the expensive validation process. Invalid actions are marked infeasible, and the planner is queried again. This repeats until the policy consists entirely of valid actions. Similar to LazySP in deterministic settings, this approach defers the bottleneck operation until a complete policy is computed, minimizing the number of expensive queries. However, it is only applicable when successor beliefs can be computed efficiently and the primary computational cost lies in verifying action feasibility.
 }


\subsection{Q-value estimators for lazy planning}
Since the estimator $\hat{Q}_{init}$ is only used to initialize the $Q$-values, it can be interpreted as an approximation of the initial $Q$-values, $Q_{init}(b,a)$, defined as

\vspace{-0.2cm}
\begin{equation} \label{Eqn: Q_{init}} \vspace{-0.2cm}
    Q_{init}(b, a) = \mathcal{C}(b, a) + \sum_{z \in Z} P(z| b, a) \cdot \text{heur}(b^z_a).
\end{equation}

A strong correlation between $\hat{Q}_{init}$ and $Q_{init}$ enhances the effectiveness of lazy planners. To guarantee optimality, $\hat{Q}_{init}$ should underestimate $Q_{init}$ (assuming the heuristic is admissible). However, a close approximation suffices for strong performance. Prior work in deterministic planning has explored using learned models to predict heuristic values \cite{veerapaneni2023learning, takahashi2019learning}. We see potential in extending these ideas to stochastic settings. Domain-specific knowledge can also be leveraged to construct useful estimators. 

Our focus is not on designing sophisticated estimators but on demonstrating the utility of lazy planners when paired with an estimator. In addition to the approaches mentioned, we present a simple domain-independent subsampling method that proved effective in our empirical analysis.

The key computational bottleneck in computing \(Q_{init}(b, a)\) (Eq.~\ref{Eqn: Q_{init}}) is determining successor beliefs \(b^z_a\). This requires evaluating action \(a\) on belief \(b\), which is expensive as \(b\) is a distribution over the state space \(\mathcal{S}\). The action must be evaluated for every state \(s\) where \(b(s) \neq 0\).\footnote{In many robotics problems, the state space is large. While the support of a typical belief state during search is substantial, it still represents only a fraction of the total state space.} Given the number of belief states typically encountered during search, the computational cost is significant.


To address this, we estimate \(Q_{init}(b,a)\) using subsampling. We construct a discrete approximation \(\hat{b}\) of \(b\) by sampling states \(s \sim b\) until \(\text{supp}(\hat{b})\) reaches a predefined threshold \(\Delta\). The frequency of occurrence of each sampled state determines its probability in \(\hat{b}\). Since \(Q_{init}(\hat{b}, a)\) can be computed efficiently due to the reduced support and approximates \(Q_{init}(b, a)\) well, we define the subsampled estimator as $\hat{Q}_{init}(b, a) = Q_{init}(\hat{b}, a)$.

The subsampling estimator in its vanilla form is effective when the heuristic satisfies 
\(\text{heur}(b) = \mathbb{E}_{s \sim b} \text{heur}(s)\), a property common in robotics planning, especially in goal-directed tasks (e.g., an autonomous vehicle reaching a goal). However, if the heuristic does not satisfy this property——such as when it measures belief entropy (e.g., in active localization, where the cost of reducing uncertainty correlates well with the current uncertainty level \cite{spaan2008cooperative})—correction terms can be introduced to mitigate bias. \suhail{Although effective, the subsampled estimates are not guaranteed to be conservative.} A detailed discussion of the subsampling estimators for our evaluated domains is included in the Appendix.


Additionally, in problem domains where belief transition costs stem primarily from the observation model, while querying the transition model is inexpensive, we propose an estimator inspired by \(Q^{MDP}\) heuristics \cite{qmdp2}. Commonly used as admissible estimates in POMDP planning, \(Q^{MDP}\) heuristics assume the state becomes fully observable after a single action. We define this estimator as:

\vspace{-0.32cm}
\begin{equation*} \vspace{0.02cm} 
    \hat{Q}_{init}^{MDP}(b, a) = \sum_{s\in \mathcal{S}} b(s) \biggl(c(s,a) + \sum_{s'} \mathcal{T}(s, a, s') \text{heur}(s')\biggr)
\end{equation*}

The estimator \(\hat{Q}_{init}^{MDP}\) is conservative if \(\text{heur}(s')\) underestimates the cost of reaching the goal set \(G\) from \(s'\) in the underlying MDP defined by the transition model $\mathcal{T}$.

\subsection{Discussion}

\suhail{The idea of lazily evaluating belief transitions by deferring their computation until necessary extends well beyond the two specific instantiations discussed above. It naturally applies to variants such as ILAO*~\cite{lao}, IBLAO*~\cite{iblao}, and AO*~\cite{ao}. POMHDP~\cite{kim2019pomhdp}, which augments RTDP-Bel with multiple heuristics, can also incorporate laziness by using separate $Q$-estimators for each heuristic. Beyond these, the concept extends to other heuristic search methods like AEMS2~\cite{aems}, BI-POMDP~\cite{bipomdp}, and ~\citet{satialave}. With a few modifications, we also see potential in extending it to off-policy algorithms like AEMS~\cite{aems}, which expand belief nodes outside the current best policy. Overall, lazy heuristic search for POMDP planning forms a broad class of algorithms. Finally, the core idea of deferring transition computations via $Q$-value estimation can be applied to MDPs as well.}

\section{Results} \label{sec: results}
The performance of lazy planners is evaluated across three problem domains: manipulation for pose estimation, indoor navigation, and outdoor rough terrain navigation. Without loss of generality, we represent the belief distribution as a discrete particle set and assume perfect observations.\footnote{This is not a limitation; our framework can directly support other belief representations and noisy observation models.}  For all problem domains, the subsampling estimator \(\hat{Q}_{init}(b, a) = Q_{init}(\hat{b}, a)\) samples \(\hat{b}\) to contain 15\% of the total particles in \(b\), i.e., \(\Delta\) is set to 15\% of the support of \(b\).
The reported planning time and cost statistics are averaged over runs where all algorithms solving at least 20\% of problems have succeeded. Statistics for algorithms solving fewer than 20\% are excluded to maintain representativeness.

\subsection{Manipulation for pose estimation using contacts}
\begin{table}[h]
    \centering
    \small 
    \renewcommand{\arraystretch}{1.2} 
    \setlength{\tabcolsep}{5pt} 
    \begin{tabular}{lccc}
        \toprule
        \textbf{Planner} & \textbf{Success Rate} & \textbf{Planning Time} & \textbf{Cost} \\
        \midrule
        RTDP-Bel       & 0.87  & 307.49  & 0.69 \\
        Lazy RTDP-Bel  & 1.0   & 183.18  & 0.71 \\
        LAO*           & 0.93  & 253.67  & 0.69 \\
        Lazy LAO*      & 1.0   & 136.73  & 0.70 \\
        \bottomrule
    \end{tabular}
    \caption{Performance comparison on the manipulation for pose estimation problem.}
    \label{tab:manipulation_planner_comparison}
    \vspace{-1em}
\end{table}

\begin{figure}
\centering
  \includegraphics[width=0.45\textwidth]{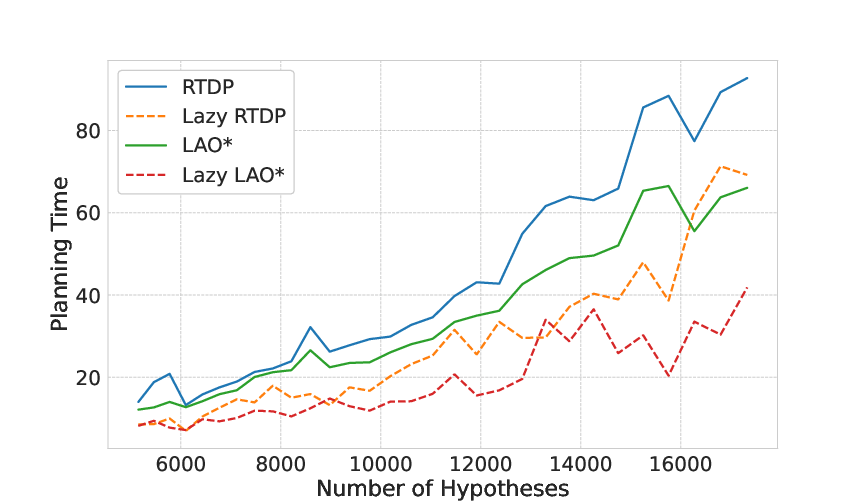} 
  \caption{Planning times as a function of object pose uncertainty on the manipulation for pose estimation problem.} \label{fig:num_hypothesis_vs_time} \vspace{-0.3cm}
\end{figure} 

\begin{figure*}
\centering
  \includegraphics[width=0.95\textwidth]{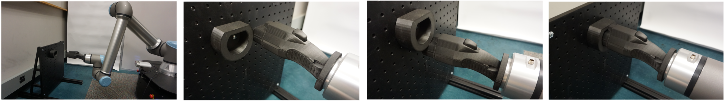}
  \caption{
  The robot utilizing contact observations from different actions to reduce uncertainty about the pose of the object of interest, i.e., the charger port, before completing the insertion task.
}\label{fig:manipulation_real_robot}\vspace{-0.3cm}
\end{figure*}

In this domain, the goal is to localize a target object $T$ (a charger plug) and complete an insertion task. The plug's pose is not directly observable; instead, the robot manipulator infers it through binary contact observations (Fig. \ref{fig:manipulation_real_robot}). This is useful when visual feedback is ineffective due to occlusions or poor lighting. The presence or absence of contact during actions reduces pose uncertainty. For instance, if no contact is observed while sweeping a volume, the object is absent in that region. Conversely, contact confirms the object's presence in the swept volume. 


Let the particle set \( H = \{h_1, h_2, \dots\} \) represent the distribution of hypothesis poses for the target object $T$. The action space includes unit robot movements and compliant controllers that track object surfaces upon contact. The observation space consists of the robot state $q$ (from encoder readings) and a binary collision signal (from an F/T sensor). The belief state is given by \( b = \{q, H\} \). The objective is to find a policy that fully localizes the target object ($|H|=1$) while minimizing the robot's expected travel distance. For a given object pose $h_i$ and action $a$, computing the expected observation—i.e., the sequence of contact signals and robot states—requires forward simulating the action with the object model at $h_i$ which involves complex mesh-to-mesh collision checks. Computing successor beliefs for \( b = \{q, H\} \) requires simulating actions for all $h \in H$, making the process computationally expensive. Further details about the problem formulation are in \citet{saleem2023preprocessing, saleem2024pomdp}. Since this is an active information-gathering problem, we define the belief state heuristic as a scaled version of the size of the set $H$ (a measure of the belief entropy). We use the subsampling estimator with a minor modification (to ensure unbiased estimation), details about which can be found in the Appendix.

Results averaged over 100 runs are presented in Table \ref{tab:manipulation_planner_comparison}. In each run, the planner resolves 3D positional uncertainties in the pose of \(T\), with uncertainty along each axis randomly chosen between 4mm and 80mm. This uncertainty is discretized at a 2mm resolution to form the hypothesis set \(H\), meaning a 30mm uncertainty per axis results in a start belief with \(15^3\) hypotheses. Each planner was given a 500-second timeout. The results show lazy planners outperform their vanilla counterparts by solving problems more efficiently through fewer belief transition evaluations. Lazy planners incurred roughly 50-60\% of the planning times of their vanilla counterparts, enabling them to tackle harder problems and achieve higher success rates. Figure \ref{fig:num_hypothesis_vs_time} shows planning times as a function of the number of hypothesis poses (\(|H|\)) in the start belief, i.e., start state uncertainty. The computational cost of computing transitions for a belief state is proportional to the number of hypotheses in it. As the start state uncertainty increases, the number of belief states with a high number of hypothesis poses encountered during the search also increases. As a result, the speedups offered by the lazy planners grow with the start state uncertainty.


\subsection{Indoor navigation}

\begin{figure}
\centering
  \includegraphics[width=0.9\linewidth]{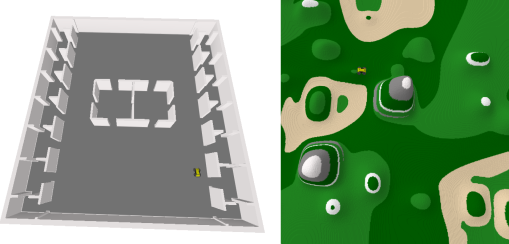}
  \caption{Visualization of the indoor (left) and outdoor navigation environments (right).}\label{fig: indoor_outdoor_map}\vspace{-0.3cm}
\end{figure}

This is a 3D navigation domain, where the robot state is defined by the tuple $(x, y, \theta)$. The map is known but the state of the robot is not directly observable, instead, the robot relies on a 1D LiDAR sensor mounted on it for perception. Computing expected observations from a given state require expensive raycasting operations to simulate the sensor. The belief over the robot's state is represented as a set of particles $H = \{h_1, h_2 ... \}$, each corresponding to a hypothesis robot pose $h_i$. Consequently, computing expected observations from a belief state requires performing raycasting from all hypothesis poses in the belief, making the belief transition computations expensive. The discrete action set $\mathcal{A}$ consists of predefined motion primitives. We evaluate two variants of the problem. For both variants we use 3 indoor environments similar to the one we see in Fig \ref{fig: indoor_outdoor_map}.

\subsubsection{With stochastic transition dynamics}
\begin{table}[h]
    \centering
    \small 
    \renewcommand{\arraystretch}{1.2} 
    \setlength{\tabcolsep}{5pt} 
    \begin{tabular}{lccc}
        \toprule
        \textbf{Planner} & \textbf{Success Rate} & \textbf{Plan Time} & \textbf{Cost} \\
        \midrule
        RTDP-Bel                                              & 0.81	& 153.04   & 14.14 \\
        Lazy RTDP-Bel \hspace{0.1cm}($\hat{Q}_{init}$)        & 0.90	& 32.46	   & 14.26 \\
        Lazy RTDP-Bel \hspace{0.1cm}($\hat{Q}_{init}^{MDP}$)  & 0.90	& 38.76    & 14.14 \\
        LAO*                                                  & 0.71	& 185.37   & 14.14 \\
        Lazy LAO*  \hspace{0.1cm}($\hat{Q}_{init}$)           & 0.77	& 33.54	   & 14.24  \\
        Lazy LAO*  \hspace{0.1cm}($\hat{Q}_{init}^{MDP}$)     & 0.77	& 40.43	   & 14.14 \\
        \bottomrule
    \end{tabular}
    \caption{Performance comparison on the indoor navigation with stochastic transition dynamics problem.}
    \label{tab:indoor_transition_planner_comparison}
\end{table}

In this variant, uncertainty in the robot's state arises from stochastic transitions. Certain regions in the map induce probabilistic motions, where the robot reaches the intended end state with probability 0.5, while slipping left or right with equal likelihood. The robot's start state is known, and the objective is to reach a goal region $G$. The belief state is represented as a set of weighted particles, with the weight corresponding to the probability of each robot pose. Heuristic is the expected distance of the states in the belief from $G$, $heur(b) = \mathbb{E}_{s \sim b} \,\, \text{Dist}(s, G)$. \text{Dist} is the minimum distance of $s$ from $G$ on the MDP defined by the transition model $\mathcal{T}$, making the heuristic admissible. Alternatively, \text{Dist} can be the Euclidean distance between $s$ and $G$, providing a quick but inadmissible heuristic. We evaluate with the subsampled estimator, and since the computational expense of the belief transitions stems from the observation model, we also evaluate with the $\hat{Q}_{init}^{MDP}$ estimator. Results presented in Table \ref{tab:indoor_transition_planner_comparison} are averaged over 200 runs, with random start states and goal regions. The timeout was 300 seconds. We observed that the lazy planners were particularly effective in this problem, improving planning speeds by 5x while computing policies with a marginally higher expected cost. The $\hat{Q}_{init}^{MDP}$ estimator resulted in slightly higher planning times in comparison to the subsampling estimator, however, it is guaranteed to be a conservative estimate \suhail{(while the subsampled estimator is not)}, resulting in solution costs identical to that of the vanilla planners.

\subsubsection{With start state uncertainty}
\begin{table}[h]
    \centering
    \small 
    \renewcommand{\arraystretch}{1.2} 
    \setlength{\tabcolsep}{5pt} 
    \begin{tabular}{lccc}
        \toprule
        \textbf{Planner} & \textbf{Success Rate} & \textbf{Planning Time} & \textbf{Cost} \\
        \midrule
        \multicolumn{4}{c}{\textbf{Goal Directed Navigation}} \\
        \midrule
        RTDP-Bel      & 0.73         & 139.32        & 19.57 \\
        Lazy RTDP-Bel & 0.89         & 84.91         & 19.61 \\
        LAO*          & 0.87         & 96.47         & 19.57 \\
        Lazy LAO*     & 0.99         & 53.42         & 19.60 \\
        \midrule
        \multicolumn{4}{c}{\textbf{Information Gathering}} \\
        \midrule
        RTDP-Bel       & 1.0       & 39.17  & 14.39 \\
        Lazy RTDP-Bel  & 1.0       & 21.32  & 14.25 \\
        LAO*           & 1.0       & 32.21  & 14.42 \\
        Lazy LAO*      & 1.0       & 16.75  & 14.87 \\
        \bottomrule
    \end{tabular}
    \caption{Performance comparison on the indoor navigation with start state uncertainty problem.} \vspace{-0.3cm}
    \label{tab:indoor_start_state_uncertainty_planner_comparison}
\end{table}


In this variant, uncertainty in the robot's state arises from start state uncertainty, defined by an initial set of equally likely hypothesis poses. The transition dynamics is deterministic, allowing the belief to be represented as a set of unweighted particles. This problem has two versions: (1) the robot must localize itself, and (2) the robot must reach a goal region \(G\). For the information-gathering version, the goal belief, belief heuristic, and \(Q\)-estimator are defined identically to the manipulation problem. In the goal-directed version, they follow the definitions used for stochastic transition dynamics. Results for both versions, averaged over 200 runs, are presented in Table~\ref{tab:indoor_start_state_uncertainty_planner_comparison}, with each run using a random set of 100 to 250 start hypothesis poses. Similar to the manipulation domain, the lazy planners on average took 60\% of the planning time of their vanilla counterparts at the expense of marginally higher costs.

\subsection{Rough terrain navigation}
\begin{table}[h]
    \centering
    \small 
    \renewcommand{\arraystretch}{1.2} 
    \setlength{\tabcolsep}{5pt} 
    \begin{tabular}{lccc}
        \toprule
        \textbf{Planner} & \textbf{Success Rate} & \textbf{Planning Time} & \textbf{Cost} \\
        \midrule
        \multicolumn{4}{c}{\textbf{Goal Directed Navigation}} \\
        \midrule
        RTDP-Bel         & 0.01         & -             & -     \\
        Lazy RTDP-Bel    & 0.21         & 1297.33       & 12.66 \\
        FH Lazy RTDP-Bel & 0.90         & 109.9         & 12.66 \\
        LAO*             & 0.09         & -             & -     \\
        Lazy LAO*        & 0.33         & 886.64        & 12.66 \\
        FH Lazy LAO*     & 0.90         & 139.52        & 12.66 \\
        \midrule
        \multicolumn{4}{c}{\textbf{Information Gathering}} \\
        \midrule
        RTDP-Bel         & 0.07         & -             & -     \\
        Lazy RTDP-Bel    & 0.19         & 1015.51       & 14.58 \\
        FH Lazy RTDP-Bel & 0.83         & 110.38        & 14.58 \\
        LAO*             & 0.09         & -             & -     \\
        Lazy LAO*        & 0.26         & 642.78        & 14.58 \\
        FH Lazy LAO*     & 0.81         & 163.79        & 14.58 \\
        \bottomrule
    \end{tabular}
    \caption{Comparison on the outdoor navigation problem.}
    \label{tab:outdoor_navigation_planner_comparison} \vspace{-0.3cm}
\end{table} 

This is a 3D outdoor navigation problem with deterministic transitions and observations. Uncertainty in the robot’s state arises from start state uncertainty, with the belief represented as a set of unweighted particles. Unlike indoor navigation, the robot lacks a 1D LiDAR sensor; instead, the environment contains landmarks that the robot can observe if within a predefined distance, making the observation model efficient to query. Consequently, both the transition and observation models are computationally cheap. However, in this domain, an action is feasible from a belief state only if it can be accurately tracked from all hypothesis states in the belief. Due to rough terrain, certain states may cause the robot to get stuck in pits, fail to climb inclines, or execute motions inaccurately. Hence, validating an action requires expensive rollouts in a physics simulator, making action verification the primary bottleneck. Since the expected trajectory that will be tracked from the hypothesis states if the action were valid is known, and the observation model is efficient, belief transitions can be computed quickly. The bottleneck operation is simply the verification of the validity of actions.

Similar to the indoor navigation case, we evaluate two versions of the problem. Table \ref{tab:outdoor_navigation_planner_comparison} presents the performance averaged over 100 runs, with each run containing 30–50 randomly sampled start state hypothesis and a timeout of 2400 seconds. FH Lazy RTDP-Bel and FH Lazy LAO* correspond to the full-horizon lazy variants discussed in Section \ref{subsection: FH Lazy}. We observe that the full-horizon lazy planners demonstrate strong performance boosting success rates by close to 90\% in comparison to the vanilla counterparts. By deferring action evaluations until after policy computation, the full-horizon lazy planners achieve a 10x speed up over the 1-step lazy planners. The high computational expense of the physics simulations make the impact of laziness pronounced in this domain, highlighting the utility of lazy planners.

\section{Conclusion}
In this work, we introduced Lazy RTDP-Bel and Lazy LAO*, two specific instances of a broader class of lazy heuristic search solvers for POMDPs with expensive-to-compute belief state transitions. These algorithms leverage $Q$-value estimates to defer expensive belief transition evaluations until they are truly necessary. By postponing these computations, our approach significantly enhances planning efficiency while preserving solution quality. We validated our methods across three challenging robotics problems, achieving substantial speedups over conventional solvers in all of them.

\section*{Acknowledgements}
This work was supported in part by Honda Research Institute and in part by the ARO-sponsored DURIP grant W911NF-21-1-0050.

\section*{Appendix}
Here, we provide details about the $Q$-estimators, $\hat{Q}_{init}$, used for the problem domains we evaluated our framework on. We also discuss and evaluate a probabilistically conservative estimator for the manipulation using contacts task to highlight the scope of estimators that can be developed for robotics problems. 

\subsection{Manipulation for pose estimation using contacts}
In this domain, the goal is to localize a target object $T$ which in this case is a charger plug, and complete a plug insertion task. However, the pose of the plug is not directly observable, instead, a robot manipulator needs to use binary contact observations to infer the pose. A problem particularly relevant in scenarios where visual feedback is hindered by occlusions or poor lighting. The presence or absence of contact during actions helps reduce uncertainty about the object's pose. For example, if no contact is observed while sweeping a volume, the object is guaranteed not to be in that region. Conversely, if contact is made, the object must be located such that it occupies some portion of the swept volume.

Let the particle set \( H = \{h_1, h_2, \dots\} \) represent the distribution of hypothesis poses for the target object \(T\). The action space includes unit robot movements and compliant controllers that track object surfaces upon contact. The observation space consists of the robot state $q$ (from encoder readings) and a binary collision signal (from an F/T sensor). The belief state is given by \( b = \{q, H\} \). The objective is to find a policy that fully localizes the target object (belief state with \(|H| = 1\)) while minimizing the robot's expected distance travelled. 

Given a target object pose $h_i$, the expected observation—i.e., the sequence of contact signals and robot states—must be computed by forward simulating the action with the object model at pose $h_i$. This involves complex mesh-to-mesh collision checks to detect exact contact points and compute the trajectory tracked by the compliant controller. To compute the successor beliefs for a given belief state, the action must be forward simulated for all hypothesis poses $h_i \in H$. This makes evaluating an action and computing successor beliefs (and their probabilities) computationally expensive, especially when the belief state contains thousands of hypothesis poses. 


Given a belief state \( b = \{r, H\} \), an action \( a \), and an observation \( z \), the corresponding successor belief \( b_a^z = \{r', H_a^z\} \) consists of all hypothesis poses under which \( z \) would have been observed when executing \( a \) from \( r \). Since the particles in the hypothesis set are unweighted, the probability of observing \( z \) is given by  $P(z| b, a) = \frac{|H_a^z|}{|H|}$. Since this is an active localization problem, we employ a scaled version of the entropy of a belief state, measured by the size of the hypothesis set \( |H| \), as our heuristic function. Specifically, we define the heuristic as $\text{heur}(b^z_a) = \alpha |H^z_a|$, where \( \alpha \) is a scaling factor. This formulation reflects the intuition that a larger number of particles in the current belief state corresponds to increased localization effort. Consequently, we can rewrite \( Q_{init} \) as follows:

\begin{equation}
    \begin{split}
        Q_{init}(b, a) &= \mathcal{C}(b, a) + \sum_{z \in \mathcal{Z}} P(z | b, a) \, \text{heur}(b_{a}^z) \\
        \implies Q_{init}(b, a) &= \mathcal{C}(b, a) + \sum_{z \in \mathcal{Z}} \frac{|H^z_a|}{|H|} (\alpha |H^z_a|) \\
        \implies Q_{init}(b, a) &= \mathcal{C}(b, a) + \alpha \sum_{z \in \mathcal{Z}} \frac{|H^z_a|^2}{|H|}
    \end{split}
\end{equation}

Hence, if we can compute \( |H^z_a| \), then we can compute \( Q_{init}(b, a) \). However, computing \( |H_a^z| \) would involve rolling out the action on all particles in \( H \).

Now, the objective is to subsample a belief state \( \hat{b} = \{r, \hat{H}\} \) from \( b \), evaluate the action on the subsampled belief state, compute \( \hat{H}^z_a \), and utilize it to estimate \( H^z_a \). For notational simplicity, let \( n \) represent the number of particles in \( H \). The subset \( \hat{H} \) is created by sampling \( k \) particles from \( H \), where \( k \ll n \). Hence, evaluating the action on \( \hat{b} \) and computing \( \hat{H}^z_a \) is significantly faster.

If we assume that the observations obtained when executing \( a \) on different particles are independent (a simplifying assumption commonly employed), we can view the problem as follows: each particle in \( H \) contains a hidden variable representing its assignment to one of \( |\mathcal{Z}| \) groups, where the assignment represents the observation obtained when executing \( a \) on the particle. The goal is to predict the distribution, i.e., the number of particles in each group, when the assignments of only a subset of the particles are visible. Based on this perspective, \( \frac{n}{k} \hat{H}^z_a \) serves as an unbiased estimator of \( H^z_a \). Since the particles are randomly chosen to be part of \( \hat{H} \), the probability of a particle being in \( \hat{H} \) is given by \( \frac{k}{n} \), and since the observations of the particles are independent, we can write:

\begin{equation}
\begin{split}
    \mathbb{E}_{\hat{H} \sim H} \, (|\hat{H}^z_a|) &= \frac{k}{n} |H^z_a| \\
    \implies \mathbb{E}_{\hat{H} \sim H} \left(\frac{n}{k}|\hat{H}^z_a|\right) &= |H^z_a|
\end{split}    
\end{equation}

This implies that we can write the unbiased subsampled estimator for the manipulation problem as:

\begin{equation} \label{Eqn: Manipulation Q_hat}
    \hat{Q}_{init}(b, a) = \mathcal{C}(b, a) + \alpha \left(\frac{n}{k}\right)^2 \sum_{z \in \mathcal{Z}} \frac{|\hat{H}^z_a|^2}{|H|}
\end{equation}

As we can see, this is simply a minor manipulation of \( Q_{init}(\hat{b}, a) \), which can be written as:

\begin{equation}
    Q_{init}(\hat{b}, a) = \mathcal{C}(b, a) + \alpha \sum_{z \in \mathcal{Z}} \frac{|\hat{H}^z_a|^2}{|H|}
\end{equation}

The estimator \( \hat{Q}_{init}(b, a) \) essentially contains a correction term in the form of the squared sampling ratio \( \left(\frac{n}{k}\right)^2 \), to make the estimate unbiased and effective for our problem domain.

\subsubsection{Probabilistically Conservative Estimator}
While the subsampled estimator defined in Eq.~\ref{Eqn: Manipulation Q_hat} is an effective and unbiased approximation, it does not guarantee conservativeness, i.e., \( \hat{Q}_{init}(b, a) \) is not necessarily less than or equal to \( Q_{init}(b, a) \). Since the heuristic in this problem is based on entropy which is not an admissible heuristic, there might not be a need for a conservative estimator. Nonetheless, we experimented with a modified version of the subsampled estimator that makes it probabilistically conservative, i.e., with 95\% confidence $\hat{Q}_{init}(b, a) \leq Q_{init}(b, a)$. 

To achieve this, we leverage Hoeffding's inequality. Given independent Bernoulli random variables \( X_i \), let \( S_n = \sum X_i \). Hoeffding’s inequality states:

\begin{equation}
    P(S_n - \mathbb{E}[S_n] \geq \epsilon) \leq \exp \left(\frac{-2\epsilon^2}{n}\right).
\end{equation}

If we assume the membership of the particles to each of the different observations to be independent (as we have previously), we can write,

\begin{equation}
    P\left(|\hat{H}^z_a| - |H^z_a| \frac{k}{n} \geq \epsilon \right) \leq \exp\left(\frac{-2\epsilon^2}{k}\right).
\end{equation}

Setting \( \epsilon = 1.22 \sqrt{k} \), we obtain:

\begin{equation}
    P\left(\left(|\hat{H}^z_a| - 1.22 \sqrt{k}\right)\frac{n}{k} \geq |H^z_a|\right) \leq 0.05.
\end{equation}

This result implies that with 95\% confidence, the term \( (|\hat{H}^z_a| - 1.22 \sqrt{k})\frac{n}{k} \) provides a conservative estimate of \( H^z_a \). Consequently, we define the probabilistically conservative subsampling estimator as:

\begin{equation} \label{Eqn: Conservative Q}
    \hat{Q}_{init}(b, a) = \mathcal{C}(b, a) + \alpha \sum_{z \in \mathcal{Z}} \frac{\big((|\hat{H}^z_a| - 1.22 \sqrt{k})\frac{n}{k}\big)^2}{|H|}.
\end{equation}

Thus, Eq.~\ref{Eqn: Conservative Q} defines an estimator that is conservative with 95\% confidence. We believe there is significant potential to explore similar and innovative estimators for common robotics problems. This example was simply to highlight the scope. Additionally, we also evaluated the conservative estimator on the manipulation using contacts task, the results for which have been included in Table \ref{tab:manipulation_planner_comparison_pce}. We represent this probabilistically conservative estimator as $\hat{Q}_{init}^{pce}$, while the vanilla subsampling estimator is continued to be represented with $\hat{Q}_{init}$.

\begin{table}[h]
    \centering
    \small 
    \renewcommand{\arraystretch}{1.2} 
    \setlength{\tabcolsep}{5pt} 
    \begin{tabular}{lccc}
        \toprule
        \textbf{Planner} & \textbf{Success Rate} & \textbf{Planning Time} & \textbf{Cost} \\
        \midrule
        RTDP-Bel                                & 0.87  & 307.49  & 0.69 \\
        Lazy RTDP-Bel ($\hat{Q}_{init}$)        & 1.0   & 183.18  & 0.71 \\
        Lazy RTDP-Bel ($\hat{Q}_{init}^{pce}$)  & 1.0   & 221.20  & 0.71 \\
        LAO*                                    & 0.93  & 253.67  & 0.69 \\
        Lazy LAO*  ($\hat{Q}_{init}$)           & 1.0   & 136.73  & 0.70 \\
        Lazy LAO*  ($\hat{Q}_{init}^{pce}$)     & 1.0   & 175.14  & 0.72 \\
        \bottomrule
    \end{tabular}
    \caption{Performance comparison on the manipulation for pose estimation problem.}
    \label{tab:manipulation_planner_comparison_pce}
\end{table}

\subsection{Navigation}
For the information-gathering version of the navigation problem, we defined the heuristic, identical to the manipulation problem to be the scaled version of the number of particles in the current belief. Hence, the subsampled estimator $\hat{Q}_{init}$ was defined identically as in Eqn. \ref{Eqn: Manipulation Q_hat} (which includes the additional bias correction term).

For the goal-directed version of the problem, the subsampled estimator was defined as outlined in Section 4 of the paper (without any additional bias correction terms).
\begin{equation}
    \hat{Q}_{init}(b, a) = Q_{init}(\hat{b}, a)
\end{equation}

The $Q_{MDP}$ estimator, $\hat{Q}_{init}^{MDP}$ was also utilized as defined in Section 4 of the paper, 

\begin{equation}
    \hat{Q}_{init}^{MDP}(b, a) = \sum b(s) \biggl(c(s,a) + \sum_{s'} T(s, a, s') \text{heur}(s')\biggr)
\end{equation}









\subsection{Subsampling Estimator}

The subsampling estimator is effective in its standard form when the heuristic satisfies  

\[
\text{heur}(b) = \mathbb{E}_{s \sim b} \text{heur}(s).
\]  

Many heuristics used in robotics planning, particularly in goal-directed tasks (e.g., an autonomous ground vehicle navigating to a goal), adhere to this property. While effective, the standard subsampling estimator is not truly unbiased due to dependencies between variables in the estimation process. To obtain a truly unbiased estimate, we must decompose the initial $Q$-value expression into independent subexpressions and estimate each component separately.  

The initial $Q$-value is defined as  

\begin{equation}
    Q_{\text{init}}(b, a) = \mathcal{C}(b, a) + \sum_{z \in Z} P(z| b, a) \cdot \text{heur}(b^z_a),
\end{equation}

where:
\begin{itemize}
    \item $\mathcal{C}(b, a)$ represents the cost of executing action $a$ in belief $b$.
    \item $P(z | b, a)$ is the probability of receiving observation $z$ after taking action $a$.
    \item $\text{heur}(b^z_a)$ is the heuristic value of the updated belief after observation $z$.
\end{itemize}

\subsubsection{Sources of Bias}

Directly estimating $Q_{\text{init}}(b, a)$ using the subsampling approach introduces bias due to the nonlinear dependencies within the summation. The observation probability $P(z | b, a)$ is given by:  

\begin{equation}
    P(z | b, a) = \sum_{s} b(s) \sum_{s'} T(s, a, s') O(s', a, z),
\end{equation}

which depends on the original belief distribution $b(s)$. Similarly, the heuristic term $\text{heur}(b^z_a)$ expands as:

\begin{equation}
    \text{heur}(b^z_a) = \sum_s b^z_a(s) \text{heur}(s),
\end{equation}

where  

\begin{equation}
    b^z_a(s) = \frac{\sum_{s'} T(s', a, s) O(s, a, z) b(s')}{P(z | b, a)}.
\end{equation}

Both the numerator and denominator of $b^z_a(s)$ are functions of $b(s)$, leading to statistical dependencies that introduce bias when estimated jointly.

\subsubsection{Unbiased Estimation}

To eliminate this bias, we construct independent estimates for three key components:
\begin{enumerate}
    \item The observation probability $P(z | b, a)$.
    \item The numerator of $\text{heur}(b^z_a)$.
    \item The denominator of $\text{heur}(b^z_a)$.
\end{enumerate}

Each component is estimated separately using independent samples:

\begin{enumerate}
    \item \textbf{Estimating $P(z | b, a)$}: Sample $s_1 \sim b(s)$ and compute:  
    \begin{equation}
        \hat{P}(z | b, a) = \mathbb{E}_{s_1 \sim b(s)} \sum_{s'} T(s_1, a, s') O(s', a, z).
    \end{equation}

    \item \textbf{Estimating the numerator of $\text{heur}(b^z_a)$}: Sample $s_2 \sim b(s)$ and compute:  
    \begin{equation}
        \hat{N} = \mathbb{E}_{s_2 \sim b(s)} \sum_s T(s_2, a, s) O(s, a, z) \text{heur}(s).
    \end{equation}

    \item \textbf{Estimating the denominator of $\text{heur}(b^z_a)$}: Sample $s_3 \sim b(s)$ and compute:  
    \begin{equation}
        \hat{D} = \mathbb{E}_{s_3 \sim b(s)} \sum_{s'} T(s_3, a, s') O(s', a, z).
    \end{equation}
\end{enumerate}

Using these independent estimates, the unbiased estimate of $Q_{\text{init}}(b, a)$ is given by:

\begin{equation}
    \hat{Q}_{\text{init}}(b, a) = \mathcal{\hat{C}}(b, a) + \sum_{z \in Z} \hat{P}(z | b, a) \frac{\hat{N}}{\hat{D}}.
\end{equation}

$\mathcal{\hat{C}}(b, a)$ can be estimated using all samples as linearity of expectation holds between $\mathcal{\hat{C}}(b, a)$ and the other estimated quantities.

\subsubsection{Practical Considerations}

Ensuring that $s_1, s_2,$ and $s_3$ are sampled independently eliminates the statistical dependency that caused bias in the subsampling estimator. However, this approach introduces a trade-off: while the estimate is now unbiased, it comes at the cost of increased variance. Given a fixed sampling budget $B$, we must distribute our available samples among three separate estimations, thereby increasing variance in each individual estimate.

In our evaluation, we employed the vanilla subsampling estimator, which performed well in practice. However, if an unbiased estimate of $Q_{\text{init}}(b, a)$ is critical, the outlined method provides a principled approach to achieving it.

\onecolumn
\newpage
\twocolumn
\bibliography{aaai25}

\begin{thebibliography}{32}
\providecommand{\natexlab}[1]{#1}

\bibitem[{Barto, Bradtke, and Singh(1995)}]{rtdp}
Barto, A.~G.; Bradtke, S.~J.; and Singh, S.~P. 1995.
\newblock Learning to act using real-time dynamic programming.
\newblock \emph{Artificial intelligence}, 72(1-2): 81--138.

\bibitem[{Bertsekas(2012)}]{bertsekas2012dynamic}
Bertsekas, D. 2012.
\newblock \emph{Dynamic programming and optimal control: Volume I}, volume~4.
\newblock Athena scientific.

\bibitem[{Bhardwaj et~al.(2021)Bhardwaj, Choudhury, Boots, and Srinivasa}]{bhardwaj2021leveraging}
Bhardwaj, M.; Choudhury, S.; Boots, B.; and Srinivasa, S. 2021.
\newblock Leveraging experience in lazy search.
\newblock \emph{Autonomous Robots}, 45(7): 979--996.

\bibitem[{Bonet and Geffner(2009)}]{rtdp-bel}
Bonet, B.; and Geffner, H. 2009.
\newblock Solving POMDPs: RTDP-Bel vs. Point-based Algorithms.
\newblock In \emph{IJCAI}, 1641--1646.

\bibitem[{Brechtel, Gindele, and Dillmann(2014)}]{pomdp_driving}
Brechtel, S.; Gindele, T.; and Dillmann, R. 2014.
\newblock Probabilistic decision-making under uncertainty for autonomous driving using continuous POMDPs.
\newblock In \emph{17th international IEEE conference on intelligent transportation systems (ITSC)}.

\bibitem[{Cohen, Phillips, and Likhachev(2015)}]{lwa}
Cohen, B.; Phillips, M.; and Likhachev, M. 2015.
\newblock Planning single-arm manipulations with n-arm robots.
\newblock In \emph{Proceedings of the International Symposium on Combinatorial Search}, volume~6, 226--227.

\bibitem[{Dellin and Srinivasa(2016)}]{lazysp}
Dellin, C.; and Srinivasa, S. 2016.
\newblock A unifying formalism for shortest path problems with expensive edge evaluations via lazy best-first search over paths with edge selectors.
\newblock In \emph{Proceedings of the international conference on automated planning and scheduling}, volume~26, 459--467.

\bibitem[{Hansen and Zilberstein(2001)}]{lao}
Hansen, E.~A.; and Zilberstein, S. 2001.
\newblock LAO*: A heuristic search algorithm that finds solutions with loops.
\newblock \emph{Artificial Intelligence}, 129(1-2): 35--62.

\bibitem[{Hauskrecht(2000)}]{qmdp2}
Hauskrecht, M. 2000.
\newblock Value-function approximations for partially observable Markov decision processes.
\newblock \emph{Journal of artificial intelligence research}, 13: 33--94.

\bibitem[{Kaelbling, Littman, and Cassandra(1998)}]{kaelbling1998planning}
Kaelbling, L.~P.; Littman, M.~L.; and Cassandra, A.~R. 1998.
\newblock Planning and acting in partially observable stochastic domains.
\newblock \emph{Artificial intelligence}, 101(1-2): 99--134.

\bibitem[{Kim, Salzman, and Likhachev(2019)}]{kim2019pomhdp}
Kim, S.-K.; Salzman, O.; and Likhachev, M. 2019.
\newblock POMHDP: Search-based belief space planning using multiple heuristics.
\newblock In \emph{Proceedings of the international conference on automated planning and scheduling}, volume~29.

\bibitem[{Kurniawati, Hsu, and Lee(2008)}]{sarsop}
Kurniawati, H.; Hsu, D.; and Lee, W.~S. 2008.
\newblock Sarsop: Efficient point-based pomdp planning by approximating optimally reachable belief spaces.
\newblock In \emph{Robotics: Science and systems}, volume 2008. Citeseer.

\bibitem[{Kurniawati and Yadav(2016)}]{kurniawati2016online}
Kurniawati, H.; and Yadav, V. 2016.
\newblock An online POMDP solver for uncertainty planning in dynamic environment.
\newblock In \emph{Robotics Research: The 16th International Symposium ISRR}, 611--629. Springer.

\bibitem[{Lee et~al.(2018)Lee, Kim, Poupart, and Kim}]{pomdp_mcts}
Lee, J.; Kim, G.-H.; Poupart, P.; and Kim, K.-E. 2018.
\newblock Monte-Carlo tree search for constrained POMDPs.
\newblock \emph{Advances in Neural Information Processing Systems}, 31.

\bibitem[{Mandalika et~al.(2019)Mandalika, Choudhury, Salzman, and Srinivasa}]{gls}
Mandalika, A.; Choudhury, S.; Salzman, O.; and Srinivasa, S. 2019.
\newblock Generalized lazy search for robot motion planning: Interleaving search and edge evaluation via event-based toggles.
\newblock In \emph{Proceedings of the International Conference on Automated Planning and Scheduling}, volume~29, 745--753.

\bibitem[{Mandalika, Salzman, and Srinivasa(2018)}]{lrha}
Mandalika, A.; Salzman, O.; and Srinivasa, S. 2018.
\newblock Lazy receding horizon A* for efficient path planning in graphs with expensive-to-evaluate edges.
\newblock In \emph{Proceedings of the international conference on automated planning and scheduling}, volume~28, 476--484.

\bibitem[{Nilsson(2014)}]{ao}
Nilsson, N.~J. 2014.
\newblock \emph{Principles of artificial intelligence}.
\newblock Morgan Kaufmann.

\bibitem[{Pajarinen and Kyrki(2017)}]{pomdp_manipulation}
Pajarinen, J.; and Kyrki, V. 2017.
\newblock Robotic manipulation of multiple objects as a POMDP.
\newblock \emph{Artificial Intelligence}, 247.

\bibitem[{Pineau et~al.(2003)Pineau, Gordon, Thrun et~al.}]{pbvi}
Pineau, J.; Gordon, G.; Thrun, S.; et~al. 2003.
\newblock Point-based value iteration: An anytime algorithm for POMDPs.
\newblock In \emph{Ijcai}, volume~3, 1025--1032.

\bibitem[{Ross, Chaib-Draa et~al.(2007)}]{aems}
Ross, S.; Chaib-Draa, B.; et~al. 2007.
\newblock AEMS: An anytime online search algorithm for approximate policy refinement in large POMDPs.
\newblock In \emph{IJCAI}, 2592--2598.

\bibitem[{Saleem, Veerapaneni, and Likhachev(2023)}]{saleem2023preprocessing}
Saleem, M.~S.; Veerapaneni, R.; and Likhachev, M. 2023.
\newblock Preprocessing-based planning for utilizing contacts in semi-structured high-precision insertion tasks.
\newblock \emph{IEEE Robotics and Automation Letters}, 8(11): 6947--6954.

\bibitem[{Saleem, Veerapaneni, and Likhachev(2024)}]{saleem2024pomdp}
Saleem, M.~S.; Veerapaneni, R.; and Likhachev, M. 2024.
\newblock A POMDP-based hierarchical planning framework for manipulation under pose uncertainty.
\newblock \emph{arXiv preprint arXiv:2409.18775}.

\bibitem[{Satia and Lave(1973)}]{satialave}
Satia, J.; and Lave, R. 1973.
\newblock Markovian decision processes with probabilistic observation of states.
\newblock \emph{Management Science}, 20(1): 1--13.

\bibitem[{Shani, Pineau, and Kaplow(2013)}]{shani2013survey}
Shani, G.; Pineau, J.; and Kaplow, R. 2013.
\newblock A survey of point-based POMDP solvers.
\newblock \emph{Autonomous Agents and Multi-Agent Systems}, 27: 1--51.

\bibitem[{Silver and Veness(2010)}]{pomcp}
Silver, D.; and Veness, J. 2010.
\newblock Monte-Carlo planning in large POMDPs.
\newblock \emph{Advances in neural information processing systems}, 23.

\bibitem[{Smith and Simmons(2012)}]{hsvi}
Smith, T.; and Simmons, R. 2012.
\newblock Heuristic search value iteration for POMDPs.
\newblock \emph{arXiv preprint arXiv:1207.4166}.

\bibitem[{Somani et~al.(2013)Somani, Ye, Hsu, and Lee}]{despot}
Somani, A.; Ye, N.; Hsu, D.; and Lee, W.~S. 2013.
\newblock DESPOT: Online POMDP planning with regularization.
\newblock \emph{Advances in neural information processing systems}, 26.

\bibitem[{Spaan(2008)}]{spaan2008cooperative}
Spaan, M.~T. 2008.
\newblock Cooperative active perception using POMDPs.
\newblock In \emph{AAAI 2008 workshop on advancements in POMDP solvers}.

\bibitem[{Takahashi et~al.(2019)Takahashi, Sun, Tian, and Wang}]{takahashi2019learning}
Takahashi, T.; Sun, H.; Tian, D.; and Wang, Y. 2019.
\newblock Learning heuristic functions for mobile robot path planning using deep neural networks.
\newblock In \emph{Proceedings of the International Conference on Automated Planning and Scheduling}, volume~29, 764--772.

\bibitem[{Veerapaneni, Saleem, and Likhachev(2023)}]{veerapaneni2023learning}
Veerapaneni, R.; Saleem, M.~S.; and Likhachev, M. 2023.
\newblock Learning local heuristics for search-based navigation planning.
\newblock In \emph{Proceedings of the International Conference on Automated Planning and Scheduling}, volume~33, 634--638.

\bibitem[{Warnquist, Kvarnstr{\"o}m, and Doherty(2010)}]{iblao}
Warnquist, H.; Kvarnstr{\"o}m, J.; and Doherty, P. 2010.
\newblock Iterative Bounding LAO.
\newblock In \emph{ECAI 2010}, 341--346. IOS Press.

\bibitem[{Washington(1997)}]{bipomdp}
Washington, R. 1997.
\newblock BI-POMDP: Bounded, incremental partially-observable Markov-model planning.
\newblock In \emph{European Conference on Planning}, 440--451. Springer.

\end{thebibliography}

\end{document}